\newtheorem{lemma}{Lemma}
\newcommand{\R}{\mathbb{R}}
\newcommand{\E}{\mathbb{E}}
\newcommand{\Prob}{\mathbb{P}}
\newcommand{\BanachSpace}{B}
\newcommand{\MetricSpace}{M}
\newcommand{\LatentSpace}{Z}
\newcommand{\pix}{t}
\DeclareMathOperator{\Wass}{Wass}
\newcommand{\norm}[1]{\|#1\|}
\newcommand{\norms}[2]{\norm{#1}_{#2}}
\title{Banach Wasserstein GAN}
\author{
	Jonas Adler \vspace{0.5mm} \\ 
	Department of Mathematics \\
	KTH - Royal institute of Technology \vspace{0.5mm} \\
	Research and Physics \\
	Elekta \vspace{0.5mm} \\
	\texttt{jonasadl@kth.se}
	\And
	Sebastian Lunz \vspace{0.5mm} \\
	Department of Applied Mathematics \\ and Theoretical Physics\\
	University of Cambridge \vspace{0.5mm}\\
	\texttt{lunz@math.cam.ac.uk}
}
\begin{document}

\maketitle

\begin{abstract}
	Wasserstein Generative Adversarial Networks (WGANs) can be used to generate realistic samples from complicated image distributions. The Wasserstein metric used in WGANs is based on a notion of distance between individual images, which  induces a notion of distance between probability distributions of images. So far the community has considered $\ell^2$ as the underlying distance. We generalize the theory of WGAN with gradient penalty to Banach spaces, allowing practitioners to select the features to emphasize in the generator. We further discuss the effect of some particular choices of underlying norms, focusing on Sobolev norms. Finally, we demonstrate a boost in performance for an appropriate choice of norm on CIFAR-10 and CelebA.
\end{abstract}

\section{Introduction}

Generative Adversarial Networks (GANs) are one of the most popular generative models \cite{gan}. A neural network, the \textit{generator}, learns a map that takes random input noise to samples from a given distribution. The training involves using a second neural network, the \textit{critic}, to discriminate between real samples and the generator output. 

In particular, \cite{wgan, improvedWgan} introduces a critic built around the Wasserstein distance between the distribution of true images and generated images.
The Wasserstein distance is inherently based on a notion of distance between images which in all implementations of Wasserstein GANs (WGAN) so far has been the $\ell^2$ distance. On the other hand, the imaging literature contains a wide range of metrics used to compare images \cite{ImageProcessing} that each emphasize different features of interest, such as edges or to more accurately approximate human observer perception of the generated image. 

There is hence an untapped potential in selecting a norm beyond simply the classical $\ell^2$ norm. We could for example select an appropriate Sobolev space to either emphasize edges, or large scale behavior. In this work we extend the classical WGAN theory to work on these and more general Banach spaces.

Our contributions are as follows:
\begin{itemize}
	\item We introduce Banach Wasserstein GAN (BWGAN), extending WGAN implemented via a gradient penalty (GP) term to any separable complete normed space.
	\item We describe how BWGAN can be efficiently implemented. The only practical difference from classical WGAN with gradient penalty is that the $\ell^2$ norm is replaced with a dual norm. We also give theoretically grounded heuristics for the choice of regularization parameters.
	\item We compare BWGAN with different norms on the  CIFAR-10 and CelebA datasets. Using the Space $L^{10}$, which puts strong emphasize on outliers, we achieve an unsupervised inception score of $8.31$ on CIFAR-10, state of the art for non-progressive growing GANs.
\end{itemize}

\section{Background}

\subsection{Generative adversarial networks}

Generative Adversarial Networks (GANs) \cite{gan} perform generative modeling by learning a map  $G: \LatentSpace \to \BanachSpace$ from a low-dimensional latent space $\LatentSpace$ to image space $\BanachSpace$, mapping a fixed noise distribution $\Prob_Z$ to a distribution of generated images $\Prob_G$.

In order to train the generative model $G$, a second network $D$ is used to discriminate between original images drawn from a distribution of real images $\Prob_r$ and images drawn from $\Prob_G$. The generator is trained to output images that are conceived to be realistic by the critic $D$. The process is iterated, leading to the famous minimax game \cite{gan} between generator $G$ and critic $D$
\begin{align}
\label{minMax}
\min_{G} \max_{D} \E_{X \sim \Prob_r} 
\left[
\log(D(X)) 
\right]	+ 
\E_{Z \sim \Prob_Z} 
\left[
\log(1-D(G_\Theta(Z)))
\right].
\end{align}

Assuming the discriminator is perfectly trained, this gives rise to the Jensen--Shannon divergence (JSD) as distance measure between the distributions $\Prob_G$ and $\Prob_r$ \cite[Theorem 1]{gan}.

\subsection{Wasserstein metrics}

To overcome undesirable behavior of the JSD in the presence of singular measures \cite{principledGANs}, in \cite{wgan} the Wasserstein metric is introduced to quantify the distance between the distributions $\Prob_G$ and $\Prob_r$. While the JSD is a strong metric, measuring distances point-wise, the Wasserstein distance is a weak metric, measuring the cost of transporting one probability distribution to another. This allows it to stay finite and provide meaningful gradients to the generator even when the measures are mutually singular.

In a rather general form, the Wasserstein metric takes into account an underlying metric $d_{\BanachSpace} : \BanachSpace \times \BanachSpace \to \mathbb{R}$ on a Polish (e.g. separable completely metrizable) space $\BanachSpace$. In its primal formulation, the Wasserstein-$p$, $p \geq 1$, distance is defined as
\begin{align}
\Wass_p(\Prob_G, \Prob_r) :=\left(\inf_{\pi \in \Pi(\Prob_G,\Prob_r)} \E_{(X_1,X_2) \sim \pi} d_\BanachSpace(X_1, X_2)^p\right)^{1/p},
\end{align}
where $\Pi(\Prob_G,\Prob_r)$ denotes the set of distributions on $\BanachSpace \times \BanachSpace$ with marginals $\Prob_G$ and $\Prob_r$. The Wasserstein distance is hence highly dependent on the choice of metric $d_{\BanachSpace}$.

The Kantorovich-Rubinstein duality \cite[5.10]{villani} provides a way of more efficiently computing the Wasserstein-1 distance (which we will henceforth simply call the Wasserstein distance, $\Wass = \Wass_1$) between measures on high dimensional spaces. The duality holds in the general setting of Polish spaces and states that
\begin{align}
\label{Kantorovich}
\Wass(\Prob_G, \Prob_r) = \sup_{\text{Lip}(f) \leq 1} \E_{X \sim \Prob_G} f(X) - \E_{X \sim \Prob_r} f(X).
\end{align}
The supremum is taken over all Lipschitz continuous functions $f: \BanachSpace \to \R$ with Lipschitz constant equal or less than one. We note that in this dual formulation, the dependence of $f$ on the choice of metric is encoded in the condition of $f$ being 1-Lipschitz and recall that a function $f : \BanachSpace \to \mathbb{R}$ is $\gamma$-Lipschitz if
\[
|f(x) - f(y)| \leq \gamma d_{\BanachSpace}(x, y).
\]

In an abstract sense, the Wasserstein metric could be used in GAN training by using a critic $D$ to approximate the supremum in \eqref{Kantorovich}. The generator uses the loss $\E_{Z \sim \Prob_Z}D(G(Z))$. In the case of a perfectly trained critic $D$, this is equivalent to using the Wasserstein loss $\Wass(\Prob_G, \Prob_r)$ to train $G$ \cite[Theorem 3]{wgan}.

\subsection{Wasserstein GAN}

Implementing GANs with the Wasserstein metric requires to approximate the supremum in \eqref{Kantorovich} with a neural network. In order to do so, the Lipschitz constraint has to be enforced on the network. In the paper Wasserstein GAN \cite{wgan} this was achieved by restricting all network parameters to lie within a predefined interval. This technique typically guarantees that the network is $\gamma$ Lipschitz for some $\gamma$ for \textit{any} metric space. However, it typically reduces the set of admissible functions to a proper subset of all $\gamma$ Lipschitz functions, hence introducing an uncontrollable additional constraint on the network. This can lead to training instabilities and artifacts in practice \cite{improvedWgan}.

In \cite{improvedWgan} strong evidence was presented that the condition can better be enforced by working with another characterization of $1-$Lipschitz functions. In particular, they prove that if $\BanachSpace = \mathbb{R}^n, d(x, y)_\BanachSpace = \norms{x - y}{2}$ we have the gradient characterization
\[
f \text { is } 1-\text{Lipschitz} \Longleftrightarrow \norms{\nabla f(x)}{2} \leq 1 \quad \text{for all } x \in \mathbb{R}^n.
\]
They softly enforce this condition by adding a penalty term to the loss function of $D$ that takes the form
\begin{align}
\label{penaltyL2}
\E_{\hat{X}} \left( \norms{\nabla D(\hat{X})}{2} -1 \right)^2,
\end{align}
where the distribution of $\hat{X}$ is taken to be the uniform distributions on lines connecting points drawn from $\Prob_G$ and $\Prob_r$.

However, penalizing the $\ell^2$ norm of the gradient corresponds specifically to choosing the $\ell^2$ norm as underlying distance measure on image space. Some research has been done on generalizing GAN theory to other spaces \cite{FOGAN, ApproximationsGAN}, but in its current form WGAN with gradient penalty does not extend to arbitrary choices of underlying spaces $\BanachSpace$. We shall give a generalization to a large class of spaces, the (separable) Banach spaces, but first we must introduce some notation.

\subsection{Banach spaces of images}
\label{sec:banach_spaces}
A vector space is a collection of objects (vectors) that can be added together and scaled, and can be seen as a generalization of the Euclidean space $\mathbb{R}^n$. If a vector space $\BanachSpace$ is equipped with a notion of length, a norm $\norms{\cdot}{\BanachSpace} : \BanachSpace \to \mathbb{R}$, we call it a normed space. The most commonly used norm is the $\ell^2$ norm defined on $\mathbb{R}^n$, given by
\[
\norms{x}{2} = 
\left(
\sum_{i=1}^n x_i^2
\right)^{1/2}.
\]
Such spaces can be used to model images in a very general fashion. In a pixelized model, the image space $\BanachSpace$ is given by the discrete pixel values, $\BanachSpace \sim \R^{n \times n}$. Continuous image models that do not rely on the concept of pixel discretization include the space of square integrable functions over the unit square. The norm $\norms{\cdot}{\BanachSpace}$ gives room for a choice on how distances between images are measured. The Euclidean distance is a common choice, but many other distance notions are possible that account for more specific image features, like the position of edges in Sobolev norms.

A normed space is called a Banach space if it is complete, that is, Cauchy sequences converge. Finally, a space is separable if there exists some countable dense subset. Completeness is required in order to ensure that the space is rich enough for us to define limits whereas separability is necessary for the usual notions of probability to hold. These technical requirements formally hold in finite dimensions but are needed in the infinite dimensional setting. We note that all separable Banach spaces are Polish spaces and we can hence define Wasserstein metrics on them using the induced metric $d_\BanachSpace(x, y) = \norms{x - y}{\BanachSpace}$.

For any Banach space $\BanachSpace$, we can consider the space of all bounded linear functionals $\BanachSpace \to \mathbb{R}$, which we will denote $\BanachSpace^*$ and call the (topological) dual of $\BanachSpace$. It can be shown \cite{RudinFA} that this space is itself a Banach space with norm $\norms{\cdot}{\BanachSpace^*} : \BanachSpace^* \to \mathbb{R}$ given by
\begin{equation}
\label{eq:dual_norm}
\norms{x^*}{\BanachSpace^*}
=
\sup_{x \in \BanachSpace}
\frac{x^*(x)}{\norms{x}{\BanachSpace}}.
\end{equation}

In what follows, we will give some examples of Banach spaces along with explicit characterizations of their duals. We will give the characterizations in continuum, but they are also Banach spaces in their discretized (finite dimensional) forms.

\paragraph{$L^p$-spaces.} Let $\Omega$ be some domain, for example $\Omega = [0,1]^2$ to model square images.  The set of functions $x: \Omega \to \mathbb{R}$ with norm
\begin{equation}
	\label{eq:lp_norm}
	\norms{x}{L^p} =
	\left(
	\int_{\Omega} x(\pix)^p d\pix
	\right)^{1/p}
\end{equation}

is a Banach space with dual $[L^p]^* = L^q$ where $1/p + 1/q = 1$. In particular, we note that $[L^2]^* = L^2$. The parameter $p$ controls the emphasis on outliers, with higher values corresponding to a stronger focus on outliers. In the extreme case $p=1$, the norm is known to induce sparsity, ensuring that all but a small amount of pixels are set to the correct values. 

\paragraph{Sobolev spaces.} Let $\Omega$ be some domain, then the set of functions $x: \Omega \to \mathbb{R}$ with norm
\[
\norms{x}{W^{1, 2}} =
\left(
\int_{\Omega} x(\pix)^2 + |\nabla x(\pix)|^2 d\pix
\right)^{1/2}
\]
where $\nabla x$ is the spatial gradient, is an example of a \textit{Sobolev} space. In this space, more emphasis is put on the edges than in e.g. $L^p$ spaces, since if $\norms{x_1 - x_2}{W^{1, 2}}$ is small then not only are their absolute values close, but so are their edges.

Since taking the gradient is equivalent to multiplying with $\xi$ in the Fourier space, the concept of Sobolev spaces can be generalized to arbitrary (real) derivative orders $s$ if we use the norm
\begin{equation}
\label{eq:sobolev_norm}
\norms{x}{W^{s, p}} =
\left(
\int_{\Omega} 
\left(
\mathcal{F}^{-1}
\left[
(1 + |\xi|^2)^{s/2}
\mathcal{F} x
\right](\pix)
\right)^p
d\pix
\right)^{1/p},
\end{equation}
where $\mathcal{F}$ is the Fourier transform. The tuning parameter $s$ allows to control which frequencies of an image are emphasized: A negative value of $s$ corresponds to amplifying low frequencies, hence prioritizing the global structure of the image. On the other hand, high values of $s$ amplify high frequencies, thus putting emphasis on sharp local structures, like the edges or ridges of an image.

The dual of the Sobolev space, $[W^{s, p}]^*$, is $W^{-s, q}$ where $q$ is as above \cite{brezis}. Under weak assumptions on $\Omega$, all Sobolev spaces with $1 \leq p < \infty$ are separable. We note that $W^{0, p} = L^p$ and in particular we recover as an important special case $W^{0, 2} = L^2$.

There is a wide range of other norms that can be defined for images, see \cref{sec:appendix_further_banach} and \cite{EncyclopediaofDistances, brezis} for a further overview of norms and their respective duals.

\section{Banach Wasserstein GANs}

In this section we generalize the loss \eqref{penaltyL2} to separable Banach spaces, allowing us to effectively train a Wasserstein GAN using arbitrary norms.

We will show that the characterization of $\gamma$-Lipschitz functions via the norm of the differential can be extended from the $\ell_2$ setting in \eqref{penaltyL2} to arbitrary Banach spaces by considering the gradient as an element in the dual of $\BanachSpace$. In particular, for any Banach space $B$ with norm $\norms{\cdot}{\BanachSpace}$, we will derive the loss function
\begin{align}
\label{eq:bwgan_loss}
L
=
\frac{1}{\gamma} \left(
\E_{X \sim \Prob_\Theta} D(X)
- 
\E_{X \sim \Prob_r} D(X)
\right)
+
\lambda \E_{\hat{X}} 
\left( 
\frac{1}{\gamma} \norms{\partial D(\hat{X})}{\BanachSpace^*} - 1 
\right)^2,
\end{align}
where $\lambda, \gamma \in \mathbb{R}$ are regularization parameters, and show that a minimizer of this this is an approximation to the Wasserstein distance on $B$.

\subsection{Enforcing the Lipschitz constraint in Banach spaces}
Throughout this chapter, let $\BanachSpace$ denote a Banach space with norm $\norms{\cdot}{\BanachSpace}$ and $f: \BanachSpace \to \R$ a continuous function. We require a more general notion of gradient: The function $f$ is called Fréchet differentiable at $x \in \BanachSpace$ if there is a bounded linear map $\partial f(x): \BanachSpace \to \R$ such that
\begin{align}
\lim_{\norms{h}{\BanachSpace} \to 0} \frac{1}{\norms{h}{\BanachSpace}} \left| f(x + h) -f(x) - \big[\partial f(x)\big](h)\right| = 0.
\end{align}
The differential $\partial f(x)$ is hence an element of the dual space $\BanachSpace^*$. We note that the usual notion of gradient $\nabla f(x)$ in $\mathbb{R}^n$ with the standard inner product is connected to the Fréchet derivative via $\big[\partial f(x)\big] (h) = \nabla f(x) \cdot h $.

The following theorem allows us to characterize all Lipschitz continuous functions according to the dual norm of the Fréchet derivative.
\begin{lemma}
	\label{GP_Banach}
	Assume $f : \BanachSpace \to \mathbb{R}$ is Fréchet differentiable. Then $f$ is $\gamma$-Lipschitz if and only if 
	\begin{align}
	\label{Gradient_cond}
	\norms{\partial f(x)}{\BanachSpace^*} \leq \gamma
	\quad
	\forall x \in \BanachSpace.
	\end{align}
\end{lemma}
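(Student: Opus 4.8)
The plan is to prove the two implications separately, in both cases reducing the Banach-space statement to a one-dimensional one by restricting $f$ to the affine segment $t \mapsto x + th$ and exploiting the defining inequality of the dual norm: from \eqref{eq:dual_norm} one reads off that $\left|\big[\partial f(z)\big](h)\right| \leq \norms{\partial f(z)}{\BanachSpace^*}\,\norms{h}{\BanachSpace}$ for every $z, h \in \BanachSpace$ (the supremum in \eqref{eq:dual_norm} ranges over $\pm h$, so the absolute value is harmless). This single inequality, together with the definition of the Fréchet derivative, will carry both directions.

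For the direction ``$\gamma$-Lipschitz $\Rightarrow$ gradient bound'', I would fix $x \in \BanachSpace$ and an arbitrary direction $h \neq 0$ and examine the difference quotient. Fréchet differentiability gives $f(x+sh) - f(x) = s\,\big[\partial f(x)\big](h) + o(|s|)$ as $s \to 0$, so the directional derivative along $h$ is exactly $\big[\partial f(x)\big](h)$. The Lipschitz hypothesis yields $|f(x+sh) - f(x)| \leq \gamma\,|s|\,\norms{h}{\BanachSpace}$, whence $\left|(f(x+sh)-f(x))/s\right| \leq \gamma \norms{h}{\BanachSpace}$ for all small $s \neq 0$. Letting $s \to 0$ produces $\left|\big[\partial f(x)\big](h)\right| \leq \gamma \norms{h}{\BanachSpace}$. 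Dividing by $\norms{h}{\BanachSpace}$ and taking the supremum over $h$ in \eqref{eq:dual_norm} gives $\norms{\partial f(x)}{\BanachSpace^*} \leq \gamma$.

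For the converse, ``gradient bound $\Rightarrow$ $\gamma$-Lipschitz'', I would fix $x, y \in \BanachSpace$ and define $g : [0,1] \to \R$ by $g(t) = f\big(x + t(y-x)\big)$. Since $f$ is Fréchet differentiable along the segment, the chain rule shows $g$ is differentiable with $g'(t) = \big[\partial f(x + t(y-x))\big](y-x)$, and $g$ is continuous because $f$ is. The scalar mean value theorem then supplies $\theta \in (0,1)$ with $f(y) - f(x) = g(1) - g(0) = g'(\theta)$. Applying the dual-norm inequality and the hypothesis $\norms{\partial f(\cdot)}{\BanachSpace^*} \leq \gamma$ gives $|f(y)-f(x)| \leq \norms{\partial f(x+\theta(y-x))}{\BanachSpace^*}\,\norms{y-x}{\BanachSpace} \leq \gamma \norms{y-x}{\BanachSpace}$, the desired bound. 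An integral form via the fundamental theorem of calculus would work equally well, but the mean value theorem sidesteps any integrability concern for $g'$.

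I expect the only genuinely delicate point to be the one-dimensional reduction in the converse: one must ensure the chain rule applies so that $g$ is differentiable on all of $(0,1)$ with the stated derivative, which is precisely where Fréchet (rather than merely Gâteaux) differentiability and continuity of $f$ along the segment are used, and that the scalar mean value theorem is legitimately invoked. Everything else is a direct unravelling of the dual-norm definition \eqref{eq:dual_norm}; notably, no compactness, completeness, or separability of $\BanachSpace$ is needed for this particular lemma.
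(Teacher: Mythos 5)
Your proof is correct and follows essentially the same route as the paper: the forward direction via difference quotients along a fixed direction $h$ followed by the supremum in \eqref{eq:dual_norm}, and the converse by restricting $f$ to the segment between $x$ and $y$ and differentiating the resulting scalar function $g$. The only difference is that you finish the converse with the mean value theorem where the paper bounds $|g(1)-g(0)|$ by $\int_0^1 |g'(t)|\,dt$; your variant is marginally cleaner since it sidesteps any integrability question for $g'$, as you note.
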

\begin{proof}
	Assume $f$ is $\gamma$-Lipschitz. Then for all $x, h \in \BanachSpace$ and $\epsilon > 0$
	\begin{align*}
	\big[\partial f(x)\big](h) 
	= 
	\lim_{\epsilon \to 0} \frac{1}{\epsilon}(f(x+\epsilon h) - f(x)) \leq \lim_{\epsilon \to 0}\frac{\gamma \epsilon \norms{h}{\BanachSpace}}{\epsilon} 
	= 
	\gamma \norms{h}{\BanachSpace},
	\end{align*}
	hence by the definition of the dual norm, \cref{eq:dual_norm}, we have 
	\[
	\norms{\partial f(x)}{\BanachSpace^*} 
	=
	\sup_{h \in \BanachSpace}
	\frac{\big[\partial f(x)\big](h)}{\norms{h}{\BanachSpace}}
	\leq
	\sup_{h \in \BanachSpace}
	\frac{\gamma \norms{h}{\BanachSpace}}{\norms{h}{\BanachSpace}}
	\leq
	\gamma.
	\]
	
	Now let $f$ satisfy \eqref{Gradient_cond} and let $x,y \in \BanachSpace$. Define the function $g: \mathbb{R} \to \mathbb{R}$ by
	\[
	g(t) = f(x(t)), \qquad \text{where } \quad x(t) = tx + (1-t)y,
	\]
	As $x(t + \Delta t) - x(t) = \Delta t (x-y)$, we see that $g$ is everywhere differentiable and 
	\[
	g'(t) = \big[\partial f\big(x(t)\big)\big](x-y).
	\]
	Hence
	\[
	|g'(t)| = \left|\big[\partial f\big(x(t)\big)\big](x-y)\right| \leq \norms{\partial f(x(t))}{\BanachSpace^*} \norms{x-y}{\BanachSpace} \leq \gamma \norms{x-y}{\BanachSpace},
	\]
	which gives
	\[
	|f(x) - f(y)| = |g(1)-g(0)| \leq \int_0^1 |g'(t)| \ dt \leq \gamma \norms{x-y}{\BanachSpace},
	\]
	thus finishing the proof.
\end{proof}


Using \cref{GP_Banach} we see that a $\gamma$-Lipschitz requirement in Banach spaces is equivalent to the dual norm of the Fréchet derivative being less than $\gamma$ everywhere. In order to enforce this we need to compute $\norms{\partial f(x)}{\BanachSpace^*}$. As shown in \cref{sec:banach_spaces}, the dual norm can be readily computed for a range of interesting Banach spaces, but we also need to compute $\partial f(x)$, preferably using readily available automatic differentiation software. However, such software can typically only compute derivatives in $\mathbb{R}^n$ with the standard norm.

Consider a finite dimensional Banach space $\BanachSpace$ equipped by \textit{any} norm $\norms{\cdot}{\BanachSpace}$. By Lemma \ref{GP_Banach}, gradient norm penalization requires characterizing (e.g. giving a basis for) the dual $\BanachSpace^*$ of $\BanachSpace$. This can be a difficult for infinite dimensional Banach spaces. In a finite dimensional however setting, there is an linear continuous bijection $\iota: \mathbb{R}^n \to \BanachSpace$ given by
\begin{align}
\label{duality}
\iota(x)_i = x_i.
\end{align}
This isomorphism implicitly relies on the fact that a basis of $\BanachSpace$ can be chosen and can be mapped to the corresponding dual basis. This does not generalize to the infinite dimensional setting, but we hope that this is not a very limiting assumption in practice.

We note that we can write $f = g \circ \iota$ where $g : \mathbb{R}^n \to \R$ and automatic differentiation can be used to compute the derivative $\partial g(x)$ efficiently. Further, note that the chain rule yields
\[
\partial f(x) = 
\iota^*
\left(
\partial g(\iota(x))
\right),
\]
where $\iota^* : \mathbb{R}^n \to \BanachSpace^*$ is the adjoint of $\iota$ which is readily shown to be as simple as $\iota$, $\iota^*(x)_i = x_i$. This shows that computing derivatives in finite dimensional Banach spaces can be done using standard automatic differentiation libraries with only some formal mathematical corrections. In an implementation, the operators $\iota, \iota^*$ would be implicit.

In terms of computational costs, the difference between general Banach Wasserstein GANs and the ones based on the $\ell^2$ metric lies in the computation of the gradient of the dual norm. By the chain rule, any computational step outside the calculation of this gradient is the same for any choice of underlying notion of distance. This in particular includes any forward pass or backpropagation step through the layers of the network used as discriminator. If there is an efficient framework available to compute the gradient of the dual norm, as in the case of the Fourier transform used for Sobolev spaces, the computational expenses hence stay essentially the same independent of the choice of norm.

\subsection{Regularization parameter choices}
\label{sec:parameters}

The network will be trained by adding the regularization term
\[
\lambda \E_{\hat{X}} \Big(\frac{1}{\gamma} \norms{\partial D(\hat{X})}{\BanachSpace^*} - 1\Big)^2.
\]
Here, $\lambda$ is a regularization constant and $\gamma$ is a scaling factor controlling which norm we compute. In particular $D$ will approximate $\gamma$ times the Wasserstein distance. In the original WGAN-GP paper \cite{improvedWgan} and most following work $\lambda = 10$ and $\gamma = 1$, while $\gamma=750$ was used in Progressive GAN \cite{progressiveGAN}. However, it is easy to see that these values are specific to the $\ell_2$ norm and that we would need to re-tune them if we change the norm. In order to avoid having to hand-tune these for every choice of norm, we will derive some heuristic parameter choice rules that work with any norm.

For our heuristic, we will start by assuming that the generator is the zero-generator, always returning zero. Assuming symmetry of the distribution of true images $\Prob_r$, the discriminator will then essentially be decided by a single constant $f(x) = c \norms{x}{\BanachSpace}$, where $c$ solves the optimization problem
\[
\min_{c \in \mathbb{R}} \mathbb{E}_{X \sim \Prob_r}
\left[
-\frac{c \norms{X}{\BanachSpace}}{\gamma} + \frac{\lambda (c - \gamma)^2}{\gamma^2}
\right].
\]
By solving this explicitly we find
\[
c =
\gamma 
\left(
1
+
\frac{\mathbb{E}_{X \sim \Prob_r} \norms{X}{\BanachSpace}}{2\lambda}
\right).
\]
Since we are trying to approximate $\gamma$ times the Wasserstein distance, and since the norm has Lipschitz constant 1, we want $c \approx \gamma$. Hence to get a small relative error we need $\mathbb{E}_{X \sim \Prob_r} \norms{X}{\BanachSpace} \ll 2 \lambda$. With this theory to guide us, we can make the heuristic rule
\[
\lambda \approx \mathbb{E}_{X \sim \Prob_r} \norms{X}{\BanachSpace}.
\]
In the special case of CIFAR-10 with the $\ell_2$ norm this gives $\lambda \approx 27$, which agrees with earlier practice ($\lambda = 10$) reasonably well.

Further, in order to keep the training stable we assume that the network should be approximately scale preserving. Since the operation $x \to \partial D(x)$ is the deepest part of the network (twice the depth as the forward evaluation), we will enforce $\norms{x}{\BanachSpace^*} \approx \norms{\partial D(x)}{\BanachSpace^*}$. Assuming $\lambda$ was appropriately chosen, we find in general (by \cref{GP_Banach}) $\norms{\partial D(x)}{\BanachSpace^*} \approx \gamma$. Hence we want $\gamma \approx \|x\|_{B^*}$. We pick the expected value as a representative and hence we obtain the heuristic
\[
\gamma \approx \mathbb{E}_{X \sim \Prob_r} \norms{X}{\BanachSpace^*}
\]
For CIFAR-10 with the $\ell_2$ norm this gives $\gamma = \lambda \approx 27$ and may explain the improved performance obtained in \cite{progressiveGAN}.

A nice property of the above parameter choice rules is that they can be used with any underlying norm. By using these parameter choice rules we avoid the issue of hand tuning further parameters when training using different norms.

\section{Computational results}

\begin{figure}[t]
	\centering
	\caption{Generated CIFAR-10 samples for some $L^{p}$ spaces.}
	\label{fig:cifar10-results}
	\begin{subfigure}[t]{0.32\textwidth}
		\centering
		\includegraphics[width=\textwidth]{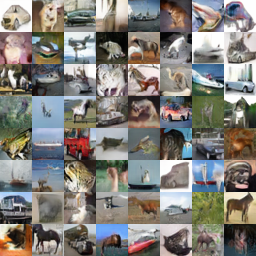}
		\caption{$p = 1.3$}
	\end{subfigure}
	\hfill
	\begin{subfigure}[t]{0.32\textwidth}
		\centering
		\includegraphics[width=\textwidth]{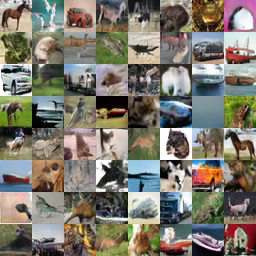}
		\caption{$p = 2.0$}
	\end{subfigure}
	\hfill
	\begin{subfigure}[t]{0.32\textwidth}
		\centering
		\includegraphics[width=\textwidth]{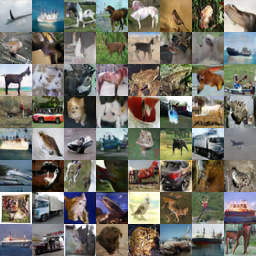}
		\caption{$p = 10.0$}
	\end{subfigure}
	
	\centering
	\caption{FID scores for BWGAN on CIFAR-10.}
	\begin{subfigure}[t]{0.47\textwidth}
		\centering
		\includegraphics[width=\linewidth]{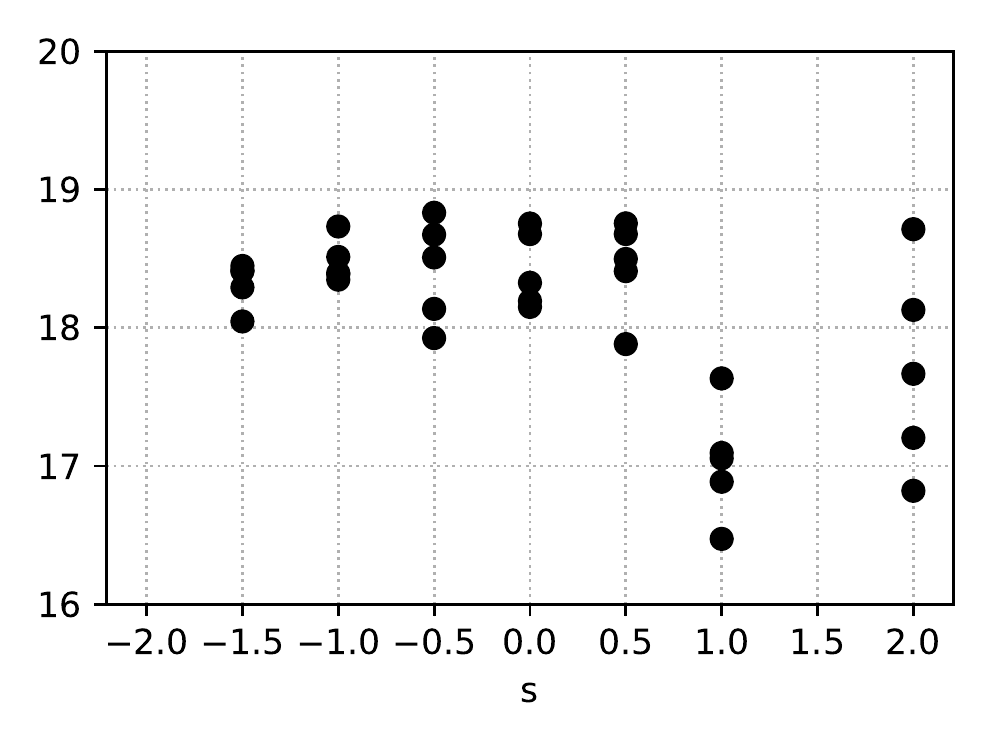}
		\caption{$W^{s, 2}$}
		\label{fig:cifar10_fid_sobolev}
	\end{subfigure}
	\hfill
	\begin{subfigure}[t]{0.47\textwidth}
		\centering
		\includegraphics[width=\textwidth]{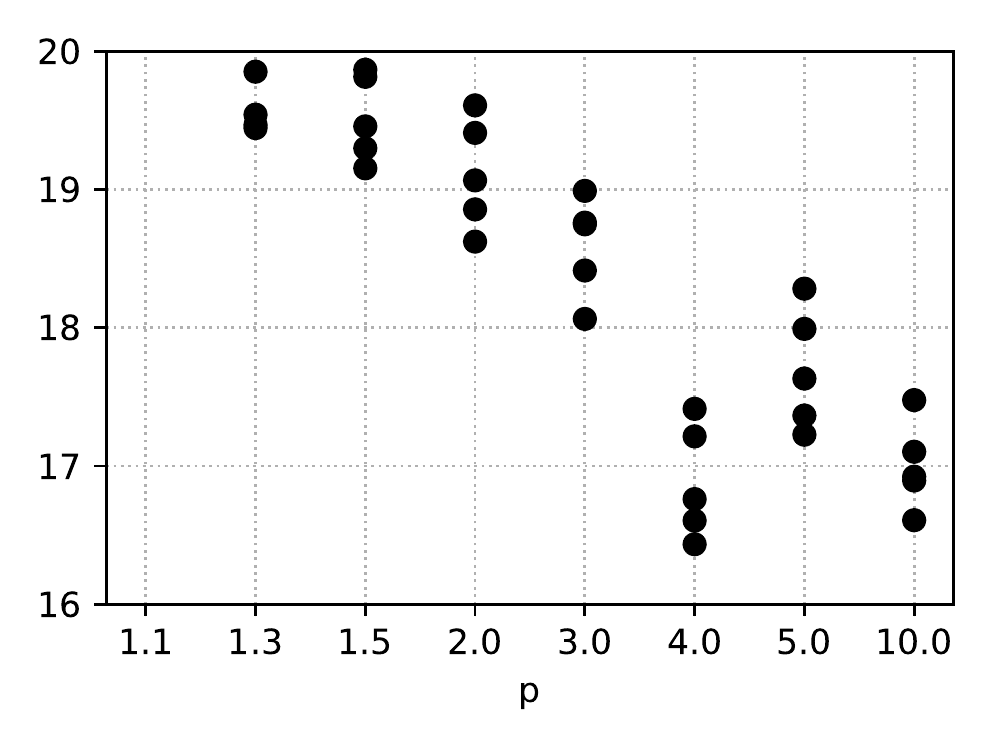}
		\caption{$L^p$}
		\label{fig:cifar10_fid_lp}
	\end{subfigure}

	\begin{minipage}[t]{0.55\linewidth}
		\captionof{figure}{Inception scores for BWGAN on CIFAR-10.}
		\vspace{7mm}
		\begin{minipage}[t]{0.49\linewidth}
			\centering
			\includegraphics[width=\linewidth]{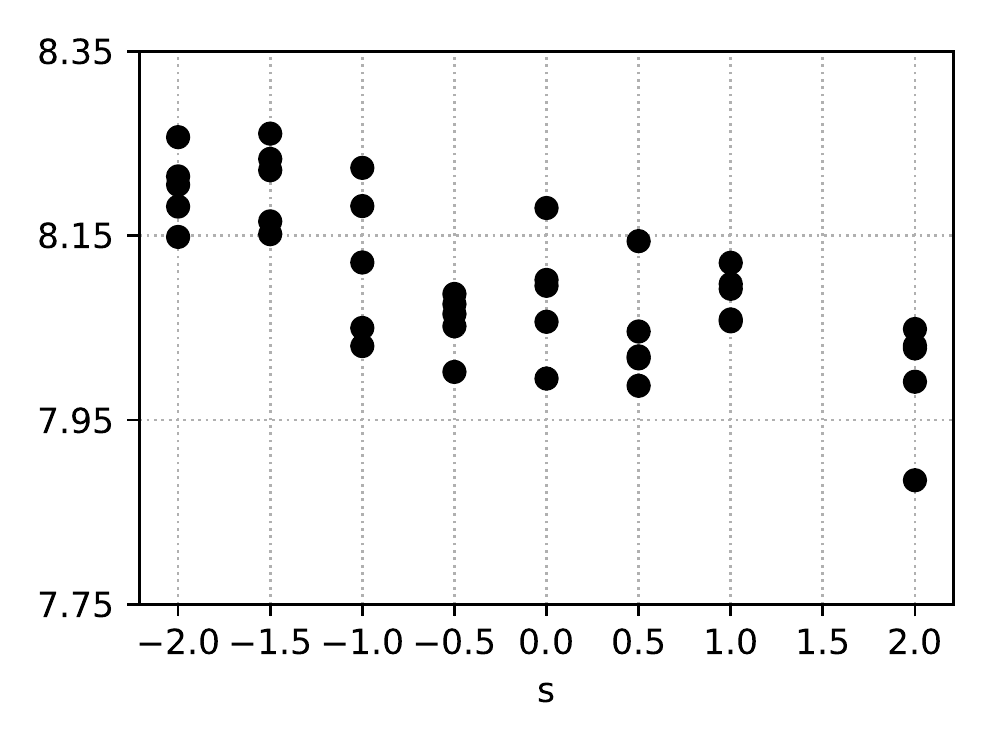}
			\label{fig:cifar10_inception_sobolev}
		\end{minipage}
		\begin{minipage}[t]{0.49\linewidth}
			\centering
			\includegraphics[width=\linewidth]{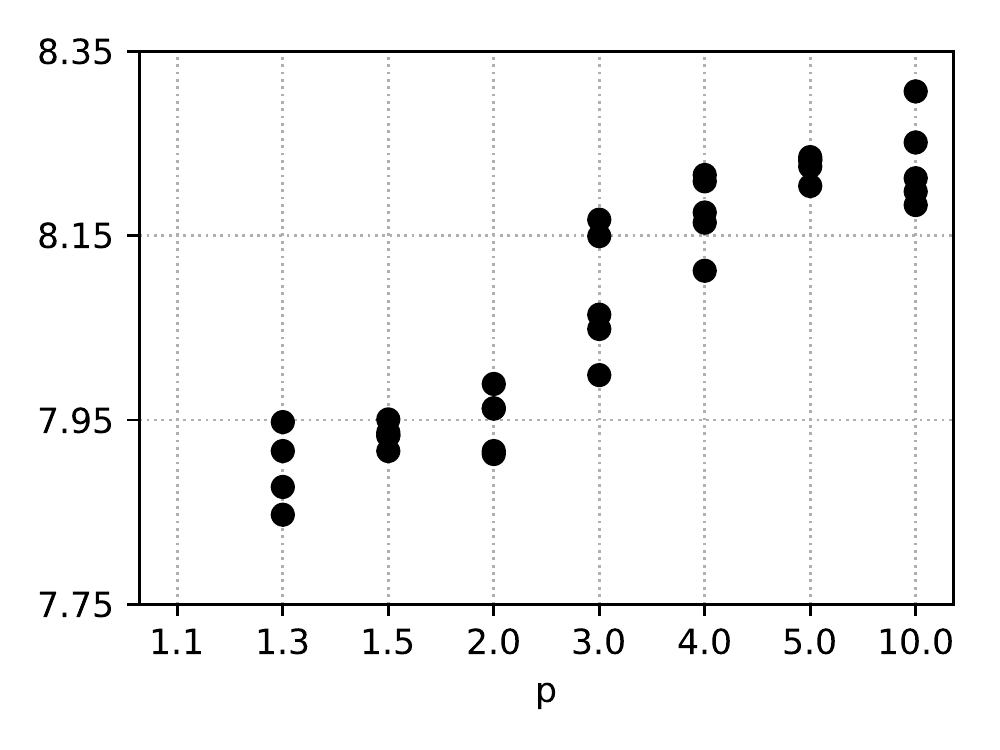}
			\label{fig:cifar10_inception_lp}
		\end{minipage}
	\end{minipage}
	\hfill
	\begin{minipage}[t]{0.42\linewidth}
		\centering
		\caption{Inception Scores on CIFAR-10.}
		\vspace{2mm}
		\begin{tabular}{ l l }
			\toprule
			Method & Inception Score \\
			\midrule
			DCGAN \cite{dcgan} &  $6.16 \pm .07$ \\
			EBGAN \cite{energyGAN}&  $7.07 \pm .10$ \\
			WGAN-GP \cite{improvedWgan} & $7.86 \pm .07$ \\
			CT GAN \cite{improvingImprovedGAN}& $8.12\pm .12$ \\
			SNGAN \cite{spectralGAN} & $8.22 \pm .05$ \\
			$W^{-\frac{3}{2},\,2}$-BWGAN & $8.26\pm .07$\\
			$L^{10}$-BWGAN & $8.31\pm .07$\\
			Progressive GAN \cite{progressiveGAN} & $8.80 \pm .05$\\
			\bottomrule
		\end{tabular}
		\label{table:incep_scpres}
	\end{minipage}\hfill
\end{figure}

\label{sec:computational}
To demonstrate computational feasibility and to show how the choice of norm can impact the trained generator, we implemented Banach Wasserstein GAN with various Sobolev and $L^p$ norms, applied to CIFAR-10 and CelebA ($64 \times 64$ pixels). The implementation was done in TensorFlow and the architecture used was a faithful re-implementation of the residual architecture used in \cite{improvedWgan}, see \cref{sec:appendix_training_details}. For the loss function, we used the loss \cref{eq:bwgan_loss} with parameters according to \cref{sec:parameters} and the norm chosen according to either the Sobolev norm \cref{eq:sobolev_norm} or the $L^p$ norm \cref{eq:lp_norm}. In the case of the Sobolev norm, we selected units such that $|\xi| \leq 5$. Following \cite{progressiveGAN}, we add a small $10^{-5} \E_{X \sim \Prob_r} D(X)^2$ term to the discriminator loss to stop it from drifting during the training.

For training we used the Adam optimizer \cite{Adam} with learning rate decaying linearly from $2\cdot 10^{-4}$ to $0$ over $100\,000$ iterations with $\beta_1 = 0$, $\beta_2 = 0.9$. We used 5 discriminator updates per generator update. The batch size used was 64. In order to evaluate the reproducibility of the results on CIFAR-10, we followed this up by training an ensemble of 5 generators using SGD with warm restarts following \cite{SGDR}. Each warm restart used $10\,000$ generator steps. Our implementation is available online\footnote{\url{https://github.com/adler-j/bwgan}}.

Some representative samples from the generator on both datasets can be seen in \cref{fig:cifar10-results,fig:CelebA-results}.
See \cref{sec:appendix_further_samples} for samples from each of the $W^{s, 2}$ and $L^p$ spaces investigated as well as samples from the corresponding Fréchet derivatives.

For evaluation, we report Fréchet Inception Distance (FID)\cite{FID} and Inception scores, both computed from 50K samples. A high image quality corresponds to high Inception and low FID scores. On the CIFAR-10 dataset, both FID and inception scores indicate that negative $s$ and large values of $p$ lead to better image quality. On CelebA, the best FID scores are obtained for values of $s$ between $-1$ and $0$ and around $p=0$, whereas the training become unstable for $p=10$. We further compare our CIFAR-10 results in terms of Inception scores to existing methods, see table \ref{table:incep_scpres}. To the best of our knowledge, the inception score of $8.31 \pm 0.07$, achieved using the $L^{10}$ space, is state of the art for non-progressive growing methods. Our FID scores are also highly competitive, for CIFAR-10 we achieve $16.43$ using $L^4$. We also note that our result for $W^{0, 2} = \ell^2$ is slightly better than the reference implementation, despite using the same network. We suspect that this is due to our improved parameter choices.

\begin{figure}[t]
	\centering
	\caption{Generated CelebA samples for Sobolev spaces $W^{s, 2}$.}
	\label{fig:CelebA-results}
	\begin{subfigure}[t]{0.32\textwidth}
		\centering
		\includegraphics[width=\textwidth]{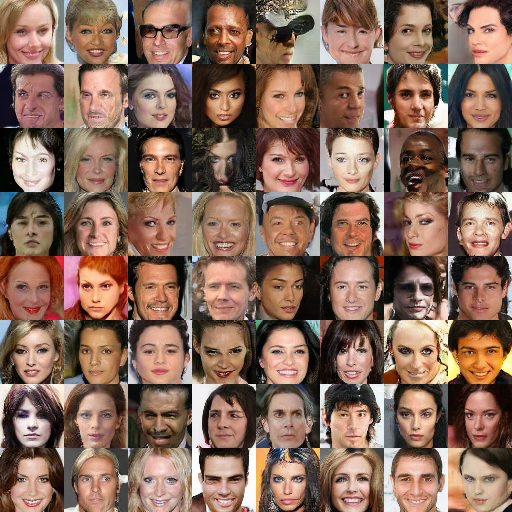}
		\caption{$s=-2$}
	\end{subfigure}
	\hfill
	\begin{subfigure}[t]{0.32\textwidth}
		\centering
		\includegraphics[width=\textwidth]{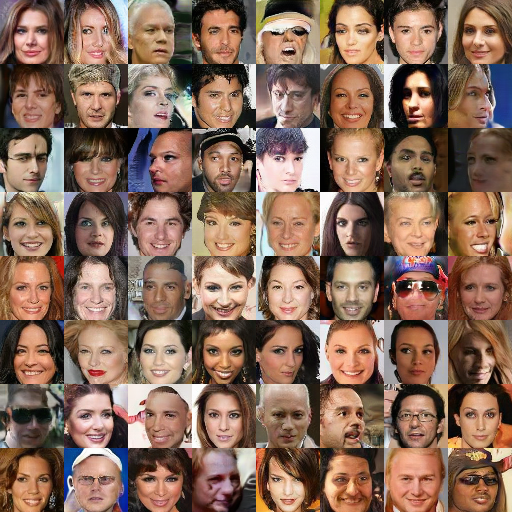}
		\caption{$s=0$}
	\end{subfigure}
	\hfill
	\begin{subfigure}[t]{0.32\textwidth}
		\centering
		\includegraphics[width=\textwidth]{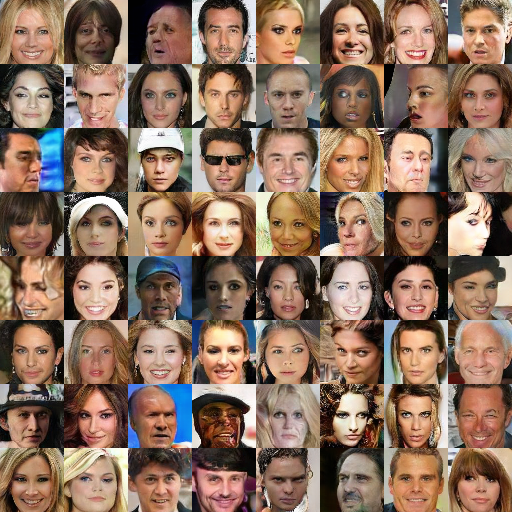}
		\caption{$s=2$}
	\end{subfigure}

	\caption{FID scores for BWGAN on CelebA.}
	\begin{subfigure}[t]{0.47\textwidth}
		\centering
		\includegraphics[width=\linewidth]{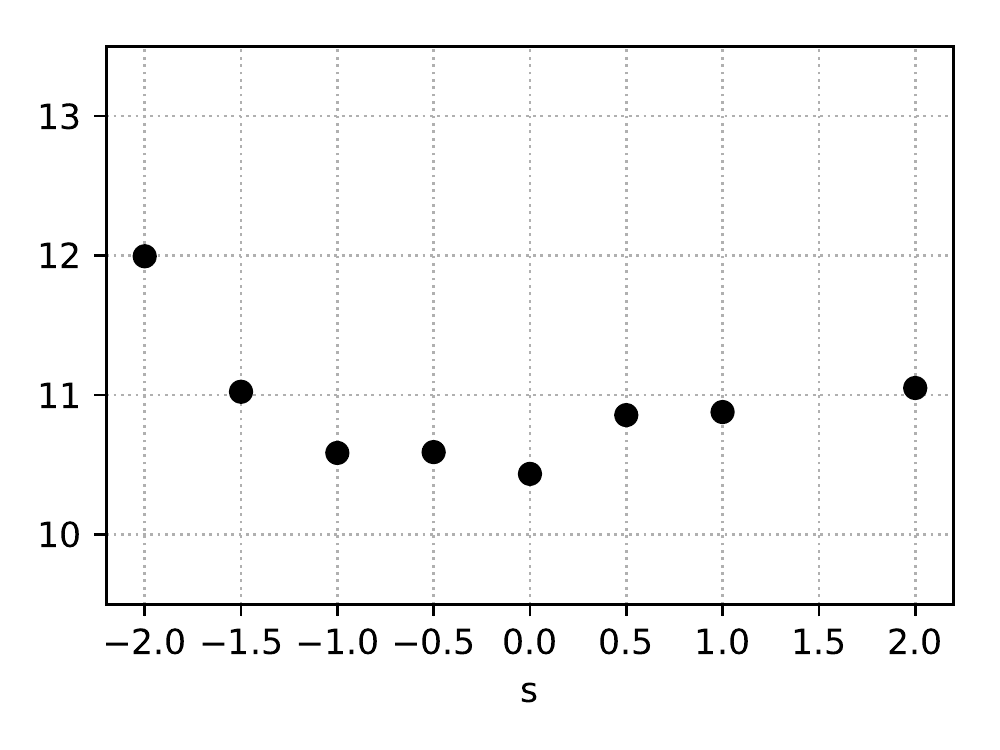}
		\caption{$W^{s, 2}$}
		\label{fig:celeba_fid_sobolev}
	\end{subfigure}
	\hfill
	\begin{subfigure}[t]{0.47\textwidth}
		\centering
		\includegraphics[width=\textwidth]{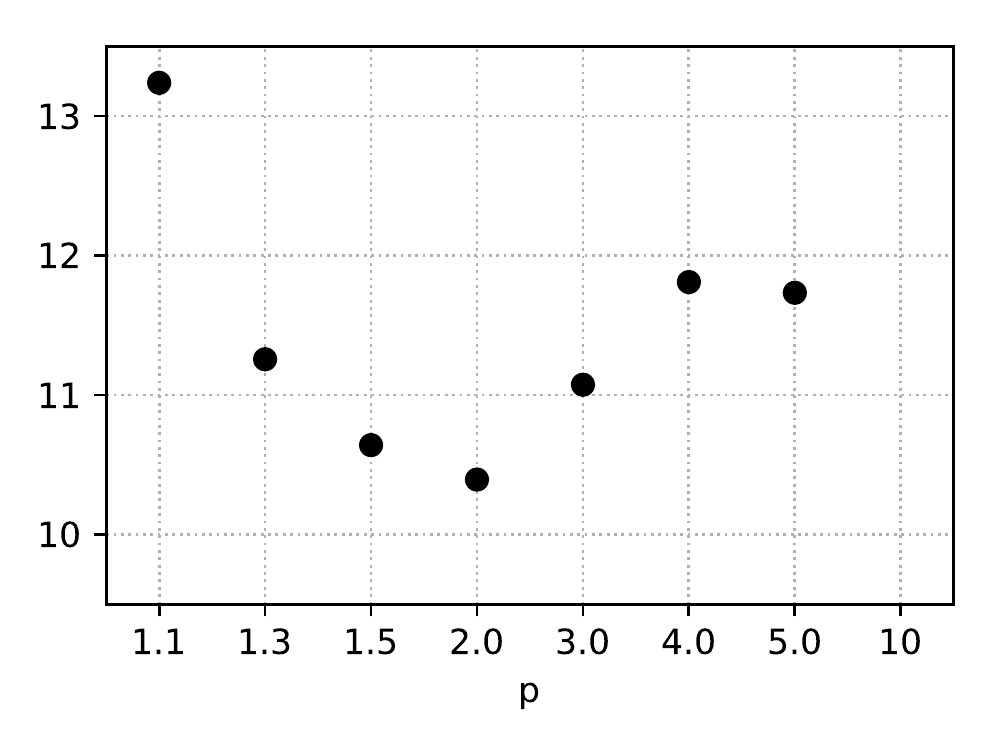}
		\caption{$L^p$}
		\label{fig:celeba_fid_lp}
	\end{subfigure}
\end{figure}

\section{How about metric spaces?}
Gradient norm penalization according to \cref{GP_Banach} is only valid in Banach spaces, but a natural alternative to penalizing gradient norms is to enforce the Lipschitz condition directly (see \cite{regularizationWGAN}). This would potentially allow training Wasserstein GAN on general metric spaces by adding a penalty term of the form
\begin{align}
\label{eq:gradient_pen} \E_{X,Y} \left[ \left(\frac{\left| f(X)-f(Y) \right|}{d_\BanachSpace(X, Y)} - 1 \right)_+^2 \right].
\end{align}
While theoretically equivalent to gradient norm penalization when the distributions of $X$ and $Y$ are chosen appropriately, this term is very likely to have considerably higher variance in practice. 

For example, if we assume that $d$ is not bounded from below and consider two points $x, y \in \MetricSpace$ that are sufficiently close then a penalty term of the Lipschitz quotient as in \eqref{eq:gradient_pen} imposes a condition on the differential around $x$ and $y$	in the direction $(x-y)$ only, i.e. only $|\partial f(\tilde{x})(x-y)| \leq 1$ is ensured. In the case of two distributions that are already close, we will with high probability sample the difference quotient in a spatial direction that is parallel to the data,  hence not exhausting the Lipschitz constraint, i.e. $|\partial f(\tilde{x})(x-y)| \ll 1$ . Difference quotient penalization \eqref{eq:gradient_pen} then does not effectively enforce the Lipschitz condition. Gradient norm penalization on the other hand ensures this condition in all spatial directions simultaneously by considering the dual norm of the differential.

On the other hand, if $d$ is bounded from below the above argument fails. For example, Wasserstein GAN over a space equipped with the trivial metric
\[
d_{\text{trivial}}(x, y) =
\begin{cases}
0 & \text{ if } x = y \\
1 & \text{ else}
\end{cases}
\]
approximates the Total Variation distance \cite{villani}. Using the regularizer \cref{eq:gradient_pen} we get a slight variation of Least Squares GAN \cite{lsgan}. We do not further investigate this line of reasoning.

\section{Conclusion}
We analyzed the dependence of Wasserstein GANs (WGANs) on the notion of distance between images and showed how choosing distances other than the $\ell^2$ metric can be used to make WGANs focus on particular image features of interest. We introduced a generalization of WGANs with gradient norm penalization to Banach spaces, allowing to easily implement WGANs for a wide range of underlying norms on images. This opens up a new degree of freedom to design the algorithm to account for the image features relevant in a specific application.

On the CIFAR-10 and CelebA dataset, we demonstrated the impact a change in norm has on model performance. In particular, we computed FID scores for Banach Wasserstein GANs using different Sobolev spaces $W^{s,p}$ and found a correlation between the values of both $s$ and $p$ with model performance.

While this work was motivated by images, the theory is general and can be applied to data in any normed space. In the future, we hope that practitioners take a step back and ask themselves if the $\ell^2$ metric is really the best measure of fit, or if some other metric better emphasize what they want to achieve with their generative model.

\subsubsection*{Acknowledgments}

The authors would like to acknowledge Peter Maass for brining us together as well as important support from Ozan \"{O}ktem, Axel Ringh, Johan Karlsson, Jens Sj\"{o}lund, Sam Power and Carola Sch\"{o}nlieb.

The work by J.A. was supported by the Swedish Foundation of Strategic Research grants AM13-0049, ID14-0055 and Elekta.
The work by S.L. was supported by the EPSRC grant EP/L016516/1 for the University of Cambridge Centre for Doctoral Training, and the Cambridge Centre for Analysis. We also acknowledge the support of the Cantab Capital Institute for the Mathematics of Information.

\bibliography{bibliography}{}

\begin{thebibliography}{10}

\bibitem{principledGANs}
Mart{\'{\i}}n Arjovsky and L{\'{e}}on Bottou.
\newblock {Towards Principled Methods for Training Generative Adversarial
  Networks}.
\newblock {\em International Conference on Learning Representations (ICLR)},
  2017.

\bibitem{wgan}
Mart{\'{\i}}n Arjovsky, Soumith Chintala, and L{\'{e}}on Bottou.
\newblock {Wasserstein Generative Adversarial Networks}.
\newblock {\em International Conference on Machine Learning, {ICML}}, 2017.

\bibitem{brezis}
Haim Brezis.
\newblock {\em {Functional analysis, Sobolev spaces and partial differential
  equations}}.
\newblock Springer Science \& Business Media, 2010.

\bibitem{ImageProcessing}
Tony~F Chan and Jianhong~Jackie Shen.
\newblock {\em {Image processing and analysis: variational, PDE, wavelet, and
  stochastic methods}}, volume~94.
\newblock Siam, 2005.

\bibitem{EncyclopediaofDistances}
Michel~Marie Deza and Elena Deza.
\newblock {\em Encyclopedia of Distances}.
\newblock Springer Berlin Heidelberg, 2009.

\bibitem{gan}
Ian Goodfellow, Jean Pouget-Abadie, Mehdi Mirza, Bing Xu, David Warde-Farley,
  Sherjil Ozair, Aaron Courville, and Yoshua Bengio.
\newblock Generative adversarial nets.
\newblock {\em Advances in neural information processing systems (NIPS)}, 2014.

\bibitem{improvedWgan}
Ishaan Gulrajani, Faruk Ahmed, Martin Arjovsky, Vincent Dumoulin, and Aaron~C
  Courville.
\newblock Improved training of wasserstein gans.
\newblock {\em Advances in Neural Information Processing Systems (NIPS)}, 2017.

\bibitem{FID}
Martin Heusel, Hubert Ramsauer, Thomas Unterthiner, Bernhard Nessler, and Sepp
  Hochreiter.
\newblock Gans trained by a two time-scale update rule converge to a local nash
  equilibrium.
\newblock In I.~Guyon, U.~V. Luxburg, S.~Bengio, H.~Wallach, R.~Fergus,
  S.~Vishwanathan, and R.~Garnett, editors, {\em Advances in Neural Information
  Processing Systems 30}, pages 6626--6637. Curran Associates, Inc., 2017.

\bibitem{progressiveGAN}
Tero Karras, Timo Aila, Samuli Laine, and Jaakko Lehtinen.
\newblock {Progressive Growing of GANs for Improved Quality, Stability, and
  Variation}.
\newblock {\em International Conference on Learning Representations (ICLR)},
  2018.

\bibitem{Adam}
Diederik~P. Kingma and Jimmy Ba.
\newblock {Adam: A Method for Stochastic Optimization}.
\newblock {\em International Conference on Learning Representations (ICLR)},
  2015.

\bibitem{ApproximationsGAN}
Shuang Liu, Olivier Bousquet, and Kamalika Chaudhuri.
\newblock {Approximation and Convergence Properties of Generative Adversarial
  Learning}.
\newblock {\em arXiv}, 2017.

\bibitem{SGDR}
Ilya Loshchilov and Frank Hutter.
\newblock {SGDR: Stochastic Gradient descent with Warm Restarts}.
\newblock {\em International Conference on Learning Representations (ICLR)},
  2017.

\bibitem{lsgan}
Xudong Mao, Qing Li, Haoran Xie, Raymond~YK Lau, Zhen Wang, and Stephen~Paul
  Smolley.
\newblock {Least squares generative adversarial networks}.
\newblock {\em IEEE International Conference on Computer Vision (ICCV)}, 2017.

\bibitem{spectralGAN}
Takeru Miyato, Toshiki Kataoka, Masanori Koyama, and Yuichi Yoshida.
\newblock {Spectral Normalization for Generative Adversarial Networks}.
\newblock {\em International Conference on Learning Representations (ICLR)},
  2018.

\bibitem{regularizationWGAN}
Henning Petzka, Asja Fischer, and Denis Lukovnikov.
\newblock {On the regularization of Wasserstein GANs}.
\newblock {\em International Conference on Learning Representations (ICLR)},
  2018.

\bibitem{dcgan}
Alec Radford, Luke Metz, and Soumith Chintala.
\newblock {Unsupervised Representation Learning with Deep Convolutional
  Generative Adversarial Networks}.
\newblock {\em International Conference on Learning Representations (ICLR)},
  2016.

\bibitem{RudinFA}
Walter Rudin.
\newblock {Functional analysis. International series in pure and applied
  mathematics}, 1991.

\bibitem{FOGAN}
Calvin Seward, Thomas Unterthiner, Urs Bergmann, Nikolay Jetchev, and Sepp
  Hochreiter.
\newblock {First Order Generative Adversarial Networks}.
\newblock {\em arXiv}, 2018.

\bibitem{villani}
C{\'e}dric Villani.
\newblock {\em {Optimal transport: old and new}}, volume 338.
\newblock Springer Science \& Business Media, 2008.

\bibitem{improvingImprovedGAN}
Xiang Wei, Zixia Liu, Liqiang Wang, and Boqing Gong.
\newblock {Improving the Improved Training of Wasserstein GANs}.
\newblock {\em International Conference on Learning Representations (ICLR)},
  2018.

\bibitem{energyGAN}
Junbo~Jake Zhao, Micha{\"e}l Mathieu, and Yann LeCun.
\newblock {Energy-based Generative Adversarial Network}.
\newblock {\em International Conference on Learning Representations (ICLR)},
  2017.

\end{thebibliography}
\bibliographystyle{plain}

\newpage
\appendix
\section{Some further Banach spaces}
\label{sec:appendix_further_banach}
There is some algebra for how to form new Banach spaces from known spaces. Specifically we have the following constructions that the reader might find useful.

\paragraph{Weighted spaces.} Let $\BanachSpace_1$ be some separable Banach space with norm $\norms{\cdot}{\BanachSpace_1}$, then we can construct another space $\BanachSpace_2$ with norm
\[
\norms{f}{\BanachSpace_2}
:=
\norms{Af}{\BanachSpace_1}
\]
where $A : \BanachSpace_2 \to \BanachSpace_1$ is a continuous linear bijection. It is straightforward to show that the dual space $\BanachSpace_2^*$ has norm
\[
\norms{f^*}{\BanachSpace_2^*}
=
\norms{A^{-*}f^*}{\BanachSpace_1^*}
\]
where $A^{-*} : \BanachSpace_2^* \to \BanachSpace_1^*$ is the adjoint of the inverse of $A$. These weighted spaces could be used to focus on some feature of interest, e.g. focus especially on the red color channel or on some spatial region of the image, the center perhaps, that is more important.

\paragraph{Product spaces.} Let $\BanachSpace_1, \dots \BanachSpace_n$ be Banach spaces and let $\BanachSpace = \BanachSpace_1 \times \dots \times \BanachSpace_n$ be the product space with norm
\[
\norms{(x_1, \dots, x_n)}{\BanachSpace} =
\left(
\sum_{i=1}^n \norms{x_i}{\BanachSpace_i}^p
\right)^{1/p}
\]
then the dual space has norm
\[
\norms{(x_1^*, \dots, x_n^*)}{\BanachSpace^*} =
\left(
\sum_{i=1}^n \norms{x_i^*}{\BanachSpace_i^*}^q
\right)^{1/q}
\]
where $1/p + 1/q = 1$. These spaces could be used to explicitly model the color channels or even to model multi-modal data such as a generator outputting both an image and a caption.

\section{Network details}
\label{sec:appendix_training_details}
The implementation on CIFAR-10 faithfully follows the source code from \cite{improvedWgan}. It uses of residual blocks consisting of "nonlinearity + conv + nonlinearity + conv + residual connection" and meanpooling/nearest neighbor interpolation as building blocks. The generator starts from a latent space of 128 normally distributed random numbers and applies a dense layer to 4x4 images and applies a residual block then an interpolation repeatedly until the resolution 32x32 is reached. Then, a nonlinearity followed by a 1x1 convolution with 3 output channels is applied in order to obtain the generated color images.

The discriminator goes the other way using pooling with a final spatial mean-pooling followed by a dense layer. We used ReLU nonlinearities, all convolutions uses 128 channels and we used batch normalization after the nonlinearities in the generator. Following, we used uniform He initialization for all convolutions except the residual connections which used uniform Xavier initialization.

The implementation for CelebA follows that for CIFAR-10, with an additional residual block for further up/downsampling added both in the generator and discriminator. 

See \cite{improvedWgan} and/or our open source implementation for further details.

\section{Further samples}
\label{sec:appendix_further_samples}

We give samples from each of the Sobolev spaces $W^{s, p}$ investigated in the paper \footnote{All images downsampled to meet size restrictions on arXiv. Full resolution version avaiable on:\\ \url{https://papers.nips.cc/paper/7909-banach-wasserstein-gan}}. The qualitative results mirror those observed in \cref{sec:computational} with higher $s$ indicating higher gradients in the discriminators Fréchet derivative, thus indicating a focus on higher frequency content. We also show further examples along with the corresponding loss gradients for some $L^p$ spaces on both the CelebA and CIFAR-10 dataset.

\begin{figure}[t]
	\centering
	\begin{subfigure}[t]{0.32\textwidth}
		\centering
		\includegraphics[width=\textwidth]{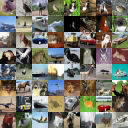}
		\caption{$s=-2$}
	\end{subfigure}
	\hspace{5mm}
	\begin{subfigure}[t]{0.32\textwidth}
		\centering
		\includegraphics[width=\textwidth]{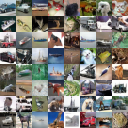}
		\caption{$s=0.0$}
	\end{subfigure}
	\\\vspace{2mm}
	\begin{subfigure}[t]{0.32\textwidth}
		\centering
		\includegraphics[width=\textwidth]{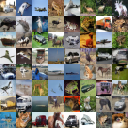}
		\caption{$s=-\frac{3}{2}$}
	\end{subfigure}
	\hspace{5mm}
	\begin{subfigure}[t]{0.32\textwidth}
		\centering
		\includegraphics[width=\textwidth]{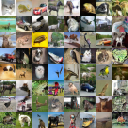}
		\caption{$s=0.5$}
	\end{subfigure}
	\\\vspace{2mm}
	\begin{subfigure}[t]{0.32\textwidth}
		\centering
		\includegraphics[width=\textwidth]{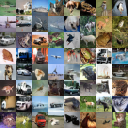}
		\caption{$s=-1.0$}
	\end{subfigure}
	\hspace{5mm}
	\begin{subfigure}[t]{0.32\textwidth}
		\centering
		\includegraphics[width=\textwidth]{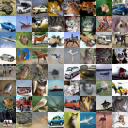}
		\caption{$s=1.0$}
	\end{subfigure}
	\\\vspace{2mm}
	\begin{subfigure}[t]{0.32\textwidth}
		\centering
		\includegraphics[width=\textwidth]{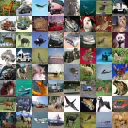}
		\caption{$s=-0.5$}
	\end{subfigure}
	\hspace{5mm}
	\begin{subfigure}[t]{0.32\textwidth}
		\centering
		\includegraphics[width=\textwidth]{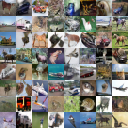}
		\caption{$s=2.0$}
	\end{subfigure}
	
	\caption{Samples for all $W^{s, 2}$-spaces investigated on CIFAR-10.}
	\label{fig:cifar10-results-extra}
\end{figure}

\begin{figure}[t]
	\centering
	\begin{subfigure}[t]{0.32\textwidth}
		\centering
		\includegraphics[width=\textwidth]{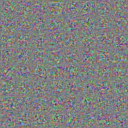}
		\caption{$s=-2$}
	\end{subfigure}
	\hspace{5mm}
	\begin{subfigure}[t]{0.32\textwidth}
		\centering
		\includegraphics[width=\textwidth]{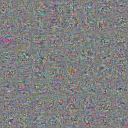}
		\caption{$s=0.0$}
	\end{subfigure}
	\\\vspace{2mm}
	\begin{subfigure}[t]{0.32\textwidth}
		\centering
		\includegraphics[width=\textwidth]{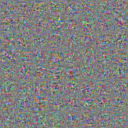}
		\caption{$s=-\frac{3}{2}$}
	\end{subfigure}
	\hspace{5mm}
	\begin{subfigure}[t]{0.32\textwidth}
		\centering
		\includegraphics[width=\textwidth]{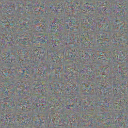}
		\caption{$s=0.5$}
	\end{subfigure}
	\\\vspace{2mm}
	\begin{subfigure}[t]{0.32\textwidth}
		\centering
		\includegraphics[width=\textwidth]{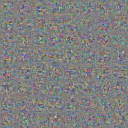}
		\caption{$s=-1.0$}
	\end{subfigure}
	\hspace{5mm}
	\begin{subfigure}[t]{0.32\textwidth}
		\centering
		\includegraphics[width=\textwidth]{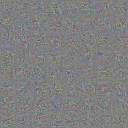}
		\caption{$s=1.0$}
	\end{subfigure}
	\\\vspace{2mm}
	\begin{subfigure}[t]{0.32\textwidth}
		\centering
		\includegraphics[width=\textwidth]{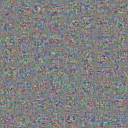}
		\caption{$s=-0.5$}
	\end{subfigure}
	\hspace{5mm}
	\begin{subfigure}[t]{0.32\textwidth}
		\centering
		\includegraphics[width=\textwidth]{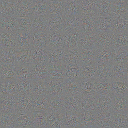}
		\caption{$s=2.0$}
	\end{subfigure}
	
	\caption{Fréchet derivatives for all $W^{s, 2}$-spaces investigated on CIFAR-10.}
	\label{fig:cifar10-gradients-extra}
\end{figure}

\begin{figure}[t]
	\centering
	\begin{subfigure}[t]{0.32\textwidth}
		\centering
		\includegraphics[width=\textwidth]{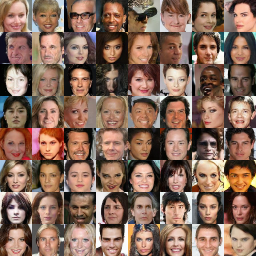}
		\caption{$s=-2$}
	\end{subfigure}
	\hspace{5mm}
	\begin{subfigure}[t]{0.32\textwidth}
		\centering
		\includegraphics[width=\textwidth]{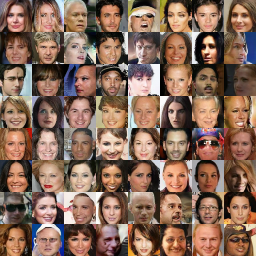}
		\caption{$s=0.0$}
	\end{subfigure}
	\\\vspace{2mm}
	\begin{subfigure}[t]{0.32\textwidth}
		\centering
		\includegraphics[width=\textwidth]{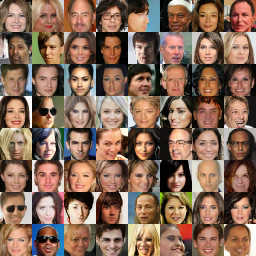}
		\caption{$s=-\frac{3}{2}$}
	\end{subfigure}
	\hspace{5mm}
	\begin{subfigure}[t]{0.32\textwidth}
		\centering
		\includegraphics[width=\textwidth]{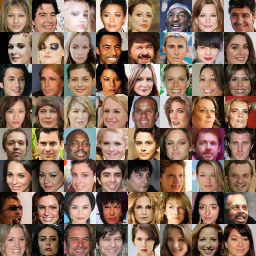}
		\caption{$s=0.5$}
	\end{subfigure}
	\\\vspace{2mm}
	\begin{subfigure}[t]{0.32\textwidth}
		\centering
		\includegraphics[width=\textwidth]{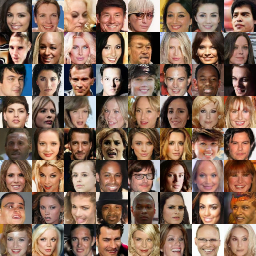}
		\caption{$s=-1.0$}
	\end{subfigure}
	\hspace{5mm}
	\begin{subfigure}[t]{0.32\textwidth}
		\centering
		\includegraphics[width=\textwidth]{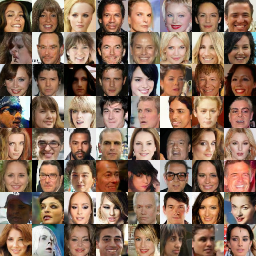}
		\caption{$s=1.0$}
	\end{subfigure}
	\\\vspace{2mm}
	\begin{subfigure}[t]{0.32\textwidth}
		\centering
		\includegraphics[width=\textwidth]{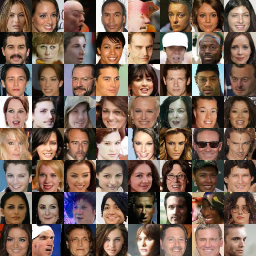}
		\caption{$s=-0.5$}
	\end{subfigure}
	\hspace{5mm}
	\begin{subfigure}[t]{0.32\textwidth}
		\centering
		\includegraphics[width=\textwidth]{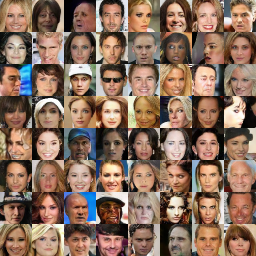}
		\caption{$s=2.0$}
	\end{subfigure}
	
	\caption{Samples for all $W^{s, 2}$-spaces investigated on CelebA.}
	\label{fig:celeba-results-extra}
\end{figure}

\begin{figure}[t]
	\centering
	\begin{subfigure}[t]{0.32\textwidth}
		\centering
		\includegraphics[width=\textwidth]{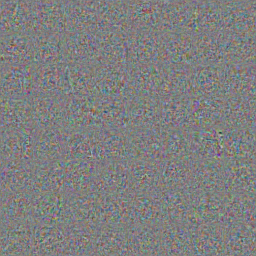}
		\caption{$s=-2$}
	\end{subfigure}
	\hspace{5mm}
	\begin{subfigure}[t]{0.32\textwidth}
		\centering
		\includegraphics[width=\textwidth]{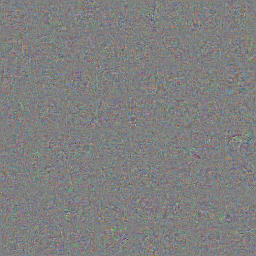}
		\caption{$s=0.0$}
	\end{subfigure}
	\\\vspace{2mm}
	\begin{subfigure}[t]{0.32\textwidth}
		\centering
		\includegraphics[width=\textwidth]{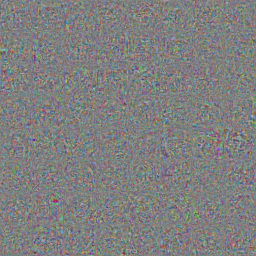}
		\caption{$s=-\frac{3}{2}$}
	\end{subfigure}
	\hspace{5mm}
	\begin{subfigure}[t]{0.32\textwidth}
		\centering
		\includegraphics[width=\textwidth]{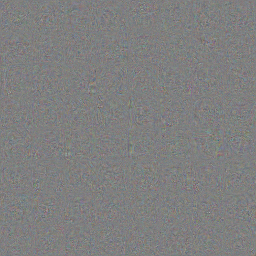}
		\caption{$s=0.5$}
	\end{subfigure}
	\\\vspace{2mm}
	\begin{subfigure}[t]{0.32\textwidth}
		\centering
		\includegraphics[width=\textwidth]{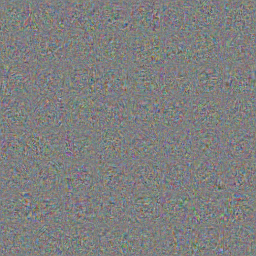}
		\caption{$s=-1.0$}
	\end{subfigure}
	\hspace{5mm}
	\begin{subfigure}[t]{0.32\textwidth}
		\centering
		\includegraphics[width=\textwidth]{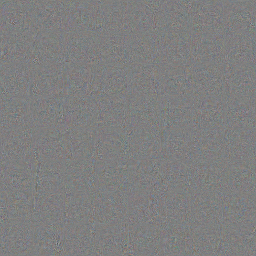}
		\caption{$s=1.0$}
	\end{subfigure}
	\\\vspace{2mm}
	\begin{subfigure}[t]{0.32\textwidth}
		\centering
		\includegraphics[width=\textwidth]{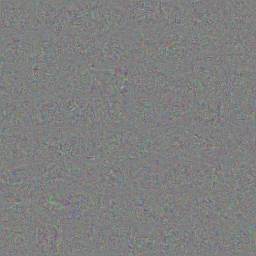}
		\caption{$s=-0.5$}
	\end{subfigure}
	\hspace{5mm}
	\begin{subfigure}[t]{0.32\textwidth}
		\centering
		\includegraphics[width=\textwidth]{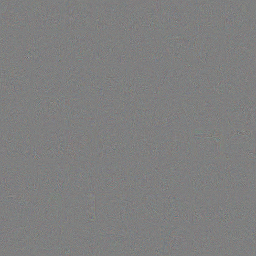}
		\caption{$s=2.0$}
	\end{subfigure}
	
	\caption{Fréchet derivatives for all $W^{s, 2}$-spaces investigated on CelebA.}
	\label{fig:celeba-gradients-extra}
\end{figure}

\begin{figure}[t]
	\centering
	\begin{subfigure}[t]{0.32\textwidth}
		\centering
		\includegraphics[width=\textwidth]{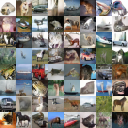}
		\caption{$p=1.1$}
	\end{subfigure}
	\hspace{5mm}
	\begin{subfigure}[t]{0.32\textwidth}
		\centering
		\includegraphics[width=\textwidth]{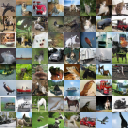}
		\caption{$p=3.0$}
	\end{subfigure}
	\\\vspace{2mm}
	\begin{subfigure}[t]{0.32\textwidth}
		\centering
		\includegraphics[width=\textwidth]{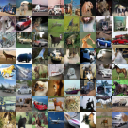}
		\caption{$p=1.3$}
	\end{subfigure}
	\hspace{5mm}
	\begin{subfigure}[t]{0.32\textwidth}
		\centering
		\includegraphics[width=\textwidth]{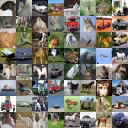}
		\caption{$p=4.0$}
	\end{subfigure}
	\\\vspace{2mm}
	\begin{subfigure}[t]{0.32\textwidth}
		\centering
		\includegraphics[width=\textwidth]{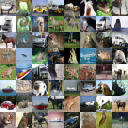}
		\caption{$p=1.5$}
	\end{subfigure}
	\hspace{5mm}
	\begin{subfigure}[t]{0.32\textwidth}
		\centering
		\includegraphics[width=\textwidth]{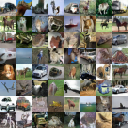}
		\caption{$p=5.0$}
	\end{subfigure}
	\\\vspace{2mm}
	\begin{subfigure}[t]{0.32\textwidth}
		\centering
		\includegraphics[width=\textwidth]{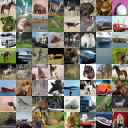}
		\caption{$p=2.0$}
	\end{subfigure}
	\hspace{5mm}
	\begin{subfigure}[t]{0.32\textwidth}
		\centering
		\includegraphics[width=\textwidth]{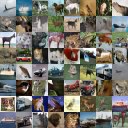}
		\caption{$p=10.0$}
	\end{subfigure}

	\caption{Samples from all $L^{p}$-spaces investigated on CIFAR-10.}
	\label{fig:cifar10-lp-generated-extra}
\end{figure}

\begin{figure}[t]
	\centering
	\begin{subfigure}[t]{0.32\textwidth}
		\centering
		\includegraphics[width=\textwidth]{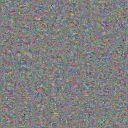}
		\caption{$p=1.1$}
	\end{subfigure}
	\hspace{5mm}
	\begin{subfigure}[t]{0.32\textwidth}
		\centering
		\includegraphics[width=\textwidth]{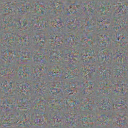}
		\caption{$p=3.0$}
	\end{subfigure}
	\\\vspace{2mm}
	\begin{subfigure}[t]{0.32\textwidth}
		\centering
		\includegraphics[width=\textwidth]{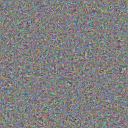}
		\caption{$p=1.3$}
	\end{subfigure}
	\hspace{5mm}
	\begin{subfigure}[t]{0.32\textwidth}
		\centering
		\includegraphics[width=\textwidth]{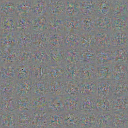}
		\caption{$p=4.0$}
	\end{subfigure}
	\\\vspace{2mm}
	\begin{subfigure}[t]{0.32\textwidth}
		\centering
		\includegraphics[width=\textwidth]{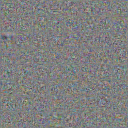}
		\caption{$p=1.5$}
	\end{subfigure}
	\hspace{5mm}
	\begin{subfigure}[t]{0.32\textwidth}
		\centering
		\includegraphics[width=\textwidth]{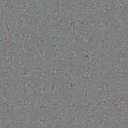}
		\caption{$p=5.0$}
	\end{subfigure}
	\\\vspace{2mm}
	\begin{subfigure}[t]{0.32\textwidth}
		\centering
		\includegraphics[width=\textwidth]{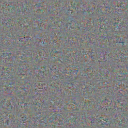}
		\caption{$p=2.0$}
	\end{subfigure}
	\hspace{5mm}
	\begin{subfigure}[t]{0.32\textwidth}
		\centering
		\includegraphics[width=\textwidth]{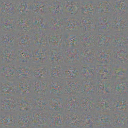}
		\caption{$p=10.0$}
	\end{subfigure}
	
	\caption{Fréchet derivatives for all $L^{p}$-spaces investigated on CIFAR-10.}
	\label{fig:cifar10-lp-gradients-extra}
\end{figure}

\begin{figure}[t]
	\centering
	\begin{subfigure}[t]{0.32\textwidth}
		\centering
		\includegraphics[width=\textwidth]{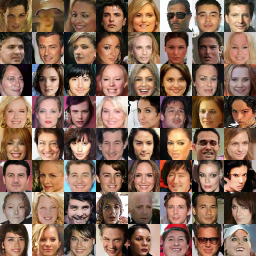}
		\caption{$p=1.1$}
	\end{subfigure}
	\hspace{5mm}
	\begin{subfigure}[t]{0.32\textwidth}
		\centering
		\includegraphics[width=\textwidth]{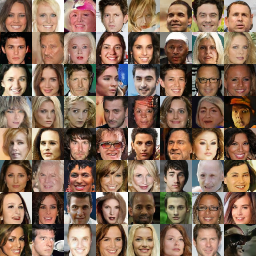}
		\caption{$p=3.0$}
	\end{subfigure}
	\\\vspace{2mm}
	\begin{subfigure}[t]{0.32\textwidth}
		\centering
		\includegraphics[width=\textwidth]{figures/appendix/celeba_resized/lp/samples_p_13s_0}
		\caption{$p=1.3$}
	\end{subfigure}
	\hspace{5mm}
	\begin{subfigure}[t]{0.32\textwidth}
		\centering
		\includegraphics[width=\textwidth]{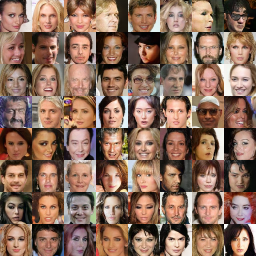}
		\caption{$p=4.0$}
	\end{subfigure}
	\\\vspace{2mm}
	\begin{subfigure}[t]{0.32\textwidth}
		\centering
		\includegraphics[width=\textwidth]{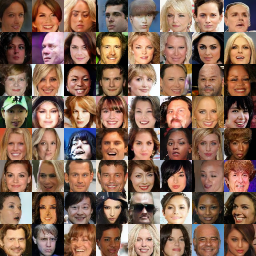}
		\caption{$p=1.5$}
	\end{subfigure}
	\hspace{5mm}
	\begin{subfigure}[t]{0.32\textwidth}
		\centering
		\includegraphics[width=\textwidth]{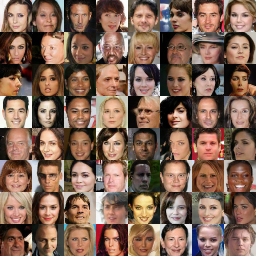}
		\caption{$p=5.0$}
	\end{subfigure}
	\\\vspace{2mm}
	\begin{subfigure}[t]{0.32\textwidth}
		\centering
		\includegraphics[width=\textwidth]{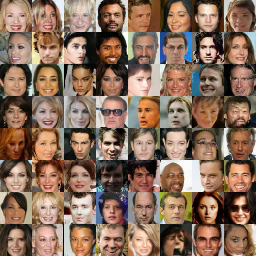}
		\caption{$p=2.0$}
	\end{subfigure}
	\hspace{5mm}
	\begin{subfigure}[t]{0.32\textwidth}
		\centering		
		Failed to train		
		\caption{$p=10.0$}
	\end{subfigure}
	
	\caption{Samples from all $L^{p}$-spaces investigated on CelebA.}
	\label{fig:celeba-lp-generated-extra}
\end{figure}

\begin{figure}[t]
	\centering
	\begin{subfigure}[t]{0.32\textwidth}
		\centering
		\includegraphics[width=\textwidth]{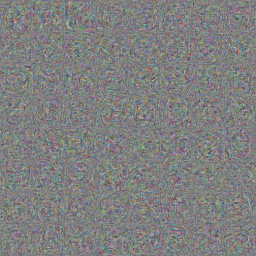}
		\caption{$p=1.1$}
	\end{subfigure}
	\hspace{5mm}
	\begin{subfigure}[t]{0.32\textwidth}
		\centering
		\includegraphics[width=\textwidth]{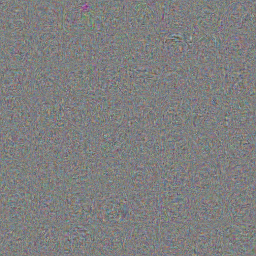}
		\caption{$p=3.0$}
	\end{subfigure}
	\\\vspace{2mm}
	\begin{subfigure}[t]{0.32\textwidth}
		\centering
		\includegraphics[width=\textwidth]{figures/appendix/celeba_resized/lp/gradients_p_13s_0}
		\caption{$p=1.3$}
	\end{subfigure}
	\hspace{5mm}
	\begin{subfigure}[t]{0.32\textwidth}
		\centering
		\includegraphics[width=\textwidth]{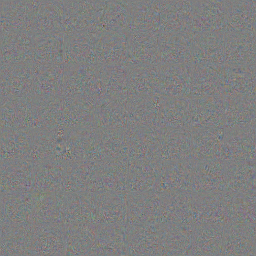}
		\caption{$p=4.0$}
	\end{subfigure}
	\\\vspace{2mm}
	\begin{subfigure}[t]{0.32\textwidth}
		\centering
		\includegraphics[width=\textwidth]{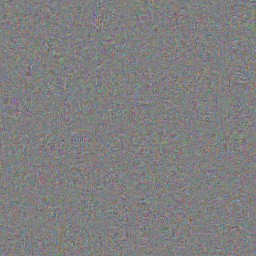}
		\caption{$p=1.5$}
	\end{subfigure}
	\hspace{5mm}
	\begin{subfigure}[t]{0.32\textwidth}
		\centering
		\includegraphics[width=\textwidth]{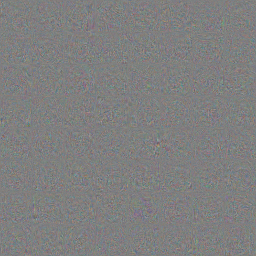}
		\caption{$p=5.0$}
	\end{subfigure}
	\\\vspace{2mm}
	\begin{subfigure}[t]{0.32\textwidth}
		\centering
		\includegraphics[width=\textwidth]{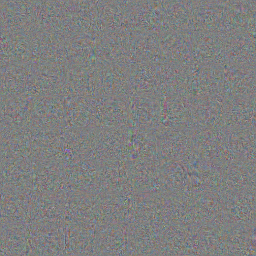}
		\caption{$p=2.0$}
	\end{subfigure}
	\hspace{5mm}
	\begin{subfigure}[t]{0.32\textwidth}
		\centering		
		Failed to train	
		\caption{$p=10.0$}
	\end{subfigure}
	
	\caption{Fréchet derivatives for all $L^{p}$-spaces investigated on CelebA.}
	\label{fig:celeba-lp-gradients-extra}
\end{figure}

\end{document}